\documentclass{article}
\usepackage{ijcai26}

\usepackage{times}
\usepackage{soul}
\usepackage{url}
\usepackage[hidelinks]{hyperref}
\usepackage[utf8]{inputenc}
\usepackage[small]{caption}
\usepackage{graphicx}
\usepackage{amsmath}
\DeclareMathOperator*{\argmin}{arg\,min}
\usepackage{amsthm}
\usepackage{booktabs}
\usepackage{algorithm}
\usepackage[switch]{lineno}
\usepackage{float}
\usepackage{placeins} % provides \FloatBarrier to stop floats drifting past a point
\usepackage{algpseudocode}
\usepackage{amssymb}
\usepackage{natbib}
\usepackage{bm}
\usepackage[dvipsnames]{xcolor}
\usepackage{multirow}
\usepackage{dblfloatfix}
\usepackage{microtype}
\usepackage{microtype}
\usepackage{placeins}
\usepackage{dblfloatfix}
\title{Amortizing Federated Adaptation: Hypernetwork Driven LoRA for Personalized Foundation Models}

\author{
Sunny Gupta$^1$
\and
Shambhavi Shanker$^1$\and
Amit Sethi$^1$\\
\affiliations
$^1$Indian Institute of Technology, Bombay\\
\emails
\{sunnygupta, 21d070066, asethi\}@iitb.ac.in,
}

\begin{document}

\maketitle

\begin{abstract}
    Federated fine-tuning of foundation models using Low-Rank Adaptation (LoRA) offers a communication efficient solution for distributed learning. However, existing federated LoRA methods suffer from two fundamental limitations: (1) structural aggregation bias, where independently averaging low rank factors fails to approximate the true combined update, and (2) client side initialization lag, as clients repeatedly reinitialize LoRA parameters across communication rounds, slowing convergence.

We propose \textbf{HyperLoRA}, a unified framework that addresses both issues through amortized federated adaptation through hypernetwork-driven LoRA generation and product space aggregation. Instead of iterative per-client optimization, HyperLoRA employs a learned generator that maps client distribution signatures to LoRA initializations, effectively amortizing per client adaptation. On the server side, we introduce a learned aggregation module that directly synthesizes updates in the low-rank product space, eliminating the inconsistencies of factor-wise averaging. A lightweight residual correction module further improves stability under heterogenous (non-IID) client distributions.

By replacing iterative optimization and heuristic averaging with learned operators, HyperLoRA jointly enables efficient personalization, unbiased aggregation, and faster convergence. Experiments on federated vision and vision-language benchmarks show that HyperLoRA achieves improved convergence speed, greater robustness to distribution shift, and stronger personalization performance compared to prior federated LoRA methods.
\end{abstract}
\section{Introduction}
\label{sec:intro}
 
Foundation models are increasingly deployed in decentralized, privacy-sensitive settings where direct fine-tuning is infeasible. Federated learning (FL)~\cite{mcmahan2017fedavg} addresses this by enabling collaborative training across clients without sharing raw data, and Low-Rank Adaptation (LoRA)~\cite{hu2022lora} makes federated fine-tuning of large models practical by training only a small set of low-rank parameters per layer~\cite{sun2024ffalora, wang2024flora}.
 
Despite this combination's appeal, existing federated LoRA methods suffer from two fundamental limitations.
 
\textbf{Aggregation bias.} Standard methods independently average the low-rank factors $\bm{A}_i, \bm{B}_i$ across clients, even though the true model update $\sum_i p_i \bm{B}_i \bm{A}_i$ lies in the \emph{product space} of these factors. The mismatch produces a systematic discrepancy between the aggregated update and the ideal one, which grows with client heterogeneity.
 
\textbf{Initialization inefficiency.} Clients reinitialize LoRA from scratch in every round, discarding information accumulated across rounds. The resulting warm-up phase delays convergence and inflates per-round local compute a particularly costly behavior on resource-constrained edge devices.
 
Our key insight is that federated adaptation has \emph{reusable structure}: similar client distributions yield similar low-rank adaptations. Rather than rediscovering this structure through repeated per-client optimization, the adaptation process itself can be amortized into learned operators.
 
We introduce \textbf{HyperLoRA}, a unified framework that replaces iterative optimization and heuristic averaging with learned, amortized operators. On the client side, a hypernetwork~\cite{ha2016hypernetworks} maps a compact distribution signature to a personalized LoRA initialization. On the server side, a learned synthesizer aggregates client updates directly in product space, with a lightweight residual corrector for stability under severe non-IID settings. This shifts federated adaptation from iterative optimization to learned generation, jointly addressing both bias and inefficiency within a single framework.
 
\paragraph{Contributions.}
\begin{itemize}
    \item We identify and formalize \emph{structural aggregation bias} in federated LoRA, and eliminate it via a learned product-space synthesizer with residual correction.
    \item We propose \emph{amortized client adaptation} through a hypernetwork conditioned on client distribution signatures, replacing repeated random initialization with personalized warm-starts.
    \item Across DomainNet and NICO++ on ViT and MLP-Mixer backbones, HyperLoRA outperforms state-of-the-art federated LoRA methods and matches their full-budget accuracy with $5\!\times$ fewer local iterations.
\end{itemize}

\section{Related Work}

\subsection{Federated Learning under Data Heterogeneity}

Federated Learning (FL) enables collaborative model training across distributed clients without sharing raw data. FedAvg~\cite{mcmahan2017fedavg} established the standard paradigm of local optimization followed by server-side parameter averaging, demonstrating strong communication efficiency. However, performance degradation under heterogeneous (non-IID) client distributions remains a major challenge.

Several methods have been proposed to improve optimization stability under heterogeneity. FedProx~\cite{li2020fedprox} introduces a proximal regularization term to limit excessive client drift, while SCAFFOLD~\cite{karimireddy2020scaffold} corrects local update variance using control variates. Personalized FL approaches further account for client-specific distributions by learning partially personalized models. Ditto~\cite{li2021ditto} jointly optimizes global and local objectives, FedAMP~\cite{huang2021fedamp} adaptively aggregates client models based on similarity, and FedBN~\cite{li2021fedbn} preserves client-specific batch normalization statistics to improve domain robustness. Additional heterogeneity-aware strategies explore adaptive aggregation, layer-wise personalization, and client-specific optimization mechanisms~\cite{tdinh2020pfedme, arivazhagan2019fedper}.

While these methods improve robustness in conventional FL settings, they primarily focus on full-model optimization and do not address the unique structural properties of parameter-efficient fine-tuning methods such as LoRA.

\subsection{Parameter-Efficient Fine-Tuning and LoRA}

The rapid growth of foundation models has made full fine-tuning computationally and memory intensive. Parameter-Efficient Fine-Tuning (PEFT) methods address this challenge by updating only a small subset of parameters while keeping the backbone frozen. Early approaches introduced trainable adapter modules~\cite{houlsby2019adapter}, bias-only fine-tuning~\cite{zaken2022bitfit}, prefix tuning~\cite{li2021prefix}, and prompt tuning~\cite{lester2021prompt}. These methods significantly reduce storage and communication overhead while retaining competitive performance.

Among PEFT approaches, LoRA~\cite{hu2022lora} has emerged as one of the most effective and widely adopted techniques. LoRA parameterizes weight updates as low-rank matrix decompositions, enabling efficient adaptation with minimal trainable parameters. Due to its strong empirical performance and compatibility with large-scale models, LoRA has been extensively applied across vision, language, and multimodal foundation models.

Recent works further extend PEFT techniques to large vision-language and multimodal systems, demonstrating that low-rank adaptation can effectively specialize foundation models while maintaining scalability~\cite{jia2022vpt, zhou2022coop, zhou2022cocoop, khattak2023maple}. Motivated by these advantages, our work focuses on federated LoRA adaptation under heterogeneous client distributions.

\subsection{Federated LoRA and Aggregation Bias}

Recent studies have explored combining LoRA with federated learning for communication-efficient fine-tuning of foundation models. FedIT~\cite{zhang2024fedit} is among the earliest approaches to integrate LoRA with FedAvg by training local LoRA modules and averaging them on the server. However, independently averaging low-rank factors introduces aggregation bias, since the averaged factors do not correspond to the true aggregated low-rank update.

Several subsequent works attempt to mitigate this issue. FFA-LoRA~\cite{sun2024ffalora} freezes one low-rank factor to ensure consistency between local and global updates, though this restricts adaptation flexibility and limits model capacity. FLoRA~\cite{wang2024flora} addresses aggregation bias by stacking and redistributing client LoRA modules, but incurs substantial communication overhead and raises privacy concerns by transmitting client-specific updates. Moreover, FLoRA repeatedly reinitializes local LoRA parameters across rounds, leading to inefficient optimization. Other approaches such as LoRA-FAIR~\cite{bian2025lorafair} and related adaptive aggregation strategies further improve robustness under heterogeneous settings, yet continue to rely on iterative optimization and approximate aggregation procedures.

In contrast to prior methods, HyperLoRA does not solely focus on correcting or approximating aggregation. Instead, it jointly addresses both aggregation bias and client initialization inefficiency through learned amortized operators. Our framework directly learns to synthesize global low-rank updates in product space while simultaneously generating client-specific LoRA initializations.

\subsection{Hypernetworks and Amortized Adaptation}

Hypernetworks~\cite{ha2017hypernetworks} generate model parameters using auxiliary neural networks and have been widely explored for efficient adaptation and personalization. In federated learning, pFedHN~\cite{shamsian2021pfedhn} employs hypernetworks to generate personalized client models conditioned on client representations, demonstrating improved adaptation under heterogeneous data distributions.

Related ideas also appear in meta-learning and amortized optimization, where a learned model predicts task-specific parameters or optimization trajectories instead of solving each task independently~\cite{finn2017maml, requeima2019cnaps}. Amortized inference methods similarly replace iterative optimization with learned inference networks that generalize across tasks.

More recently, context-conditioned LoRA generation methods, such as Doc-to-LoRA~\cite{charakorn2026d2l} and Text-to-LoRA~\cite{charakorn2025t2l}, generate LoRA parameters directly from contextual representations, enabling rapid task adaptation without explicit fine-tuning. These works demonstrate that low-rank adaptation itself can be learned as a parametric mapping.

However, existing hypernetwork and amortized adaptation methods have not been unified with federated LoRA aggregation. Prior approaches focus either on personalized parameter generation or on federated optimization, but do not jointly address client-specific initialization and aggregation bias in low-rank federated adaptation.

\subsection{Personalized Adaptation of Foundation Models}

Personalized federated learning aims to learn client-specific models that better capture heterogeneous local distributions while retaining the benefits of collaborative training. Existing approaches explore local fine-tuning, personalized heads, adaptive aggregation, and mixture-based personalization strategies~\cite{tan2022pflsurvey, collins2021fedrep}. 

With the emergence of foundation models, personalization has increasingly focused on parameter-efficient adaptation mechanisms that enable lightweight client specialization. Recent studies investigate client-specific prompt tuning, adapter learning, and LoRA-based personalization for large language and vision-language models~\cite{yi2023fedlora, yu2023fedpeft, ren2025fedfmsurvey}. These methods improve local adaptation while preserving communication efficiency.

Despite this progress, existing approaches typically treat personalization and aggregation as separate problems. HyperLoRA bridges this gap by integrating amortized client-specific generation with learned aggregation within a unified framework.
% =============================================================================
%  HyperLoRA — Methodology (NeurIPS-standard rewrite)
%
%  Required preamble (add to main.tex if not present):
%     \usepackage{amsmath, amssymb, amsthm, bm, algorithm, algpseudocode}
\newtheorem{proposition}{Proposition}
\newtheorem{remark}{Remark}
%     \DeclareMathOperator*{\argmin}{arg\,min}
% =============================================================================

\section{Methodology}
\label{sec:method}

\begin{figure*}[t]
    \centering
    \includegraphics[width=\linewidth]{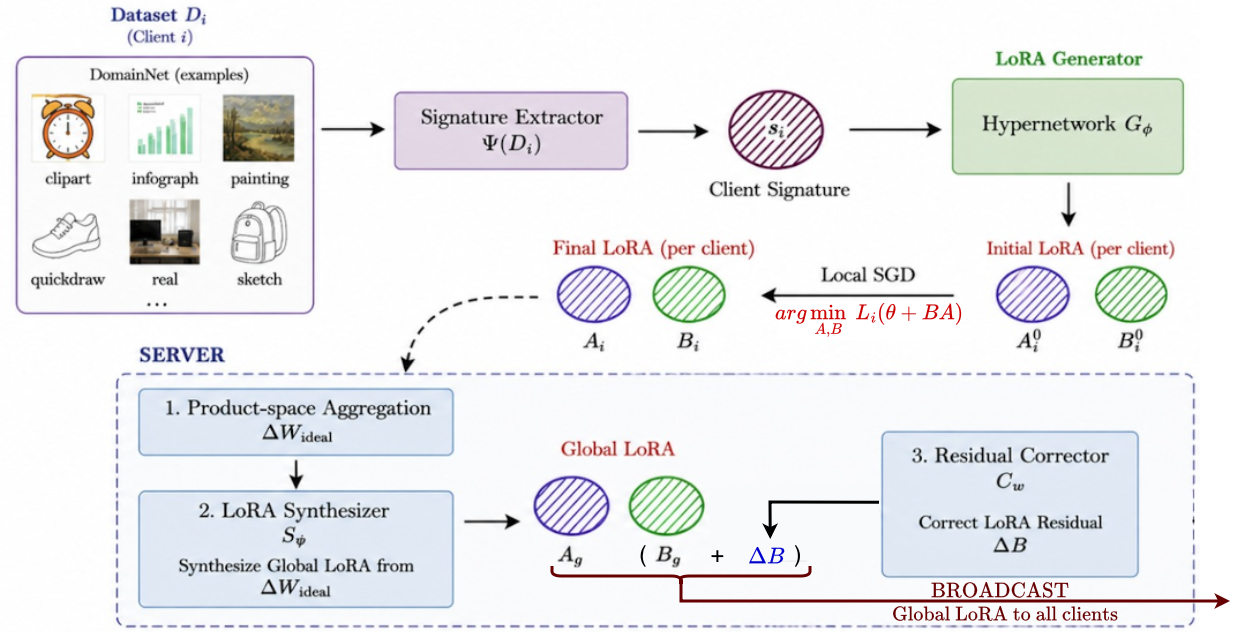}
    \caption{
    \textbf{HyperLoRA.} 
    Each client extracts a low-dimensional distribution signature $\bm{s}_i$ from its private dataset and feeds it to a shared hypernetwork generator $G_\phi$, which produces a personalized LoRA initialization $(\bm{A}_i^0,\bm{B}_i^0)$. After $E$ steps of local optimization, the resulting low-rank factors $(\bm{A}_i,\bm{B}_i)$ are uploaded. Rather than averaging factors independently which incurs structural bias (\S\ref{subsec:agg_bias}) the server applies a learned synthesizer $S_\psi$ that operates directly in the product space $\bm{B}\bm{A}$, followed by a residual corrector $C_\omega$ that compensates for low-rank projection error under severe heterogeneity.
    }
    \label{fig:hyperlora}
\end{figure*}

\subsection{Preliminaries}
\label{subsec:prelim}

\paragraph{Federated setup.}
We consider $K$ clients indexed by $i \in [K]$, each holding a private dataset $\mathcal{D}_i = \{(x,y)\}$ drawn from a client-specific distribution $\mathcal{P}_i$. We make no assumption that the $\mathcal{P}_i$ are identical; in fact, our design specifically targets the heterogeneous (non-IID) regime. A pre-trained foundation model $f_\theta$ is shared, with the backbone parameters $\theta$ frozen throughout federated training.

\paragraph{Low-rank adaptation.}
For each adapted linear layer $\bm{W}\!\in\!\mathbb{R}^{d\times d}$, LoRA~\citep{hu2022lora} introduces a rank-$r$ update $\Delta\bm{W} = \bm{B}\bm{A}$ with $\bm{B}\!\in\!\mathbb{R}^{d\times r}$, $\bm{A}\!\in\!\mathbb{R}^{r\times d}$, and $r \ll d$, yielding $\bm{W}' = \bm{W} + \bm{B}\bm{A}$. Only $(\bm{A},\bm{B})$ are trainable, reducing per-layer cost from $\mathcal{O}(d^2)$ to $\mathcal{O}(rd)$.

\paragraph{Federated objective.}
Let $p_i = |\mathcal{D}_i| / \sum_j |\mathcal{D}_j|$. The standard federated LoRA objective is
\begin{equation}
\label{eq:fl_obj}
\begin{aligned}
    \min_{\{\bm{A}_i,\bm{B}_i\}_{i=1}^K} \sum_{i=1}^K p_i\, \mathcal{L}_i(\theta + \bm{B}_i\bm{A}_i),
    \quad
    \\
\mathcal{L}_i(\theta+\bm{B}\bm{A}) \!=\! \mathbb{E}_{(x,y)\sim\mathcal{D}_i}\, \ell\!\left(f_{\theta+\bm{B}\bm{A}}(x),\,y\right).
\end{aligned}
\end{equation}
Throughout, we use the Frobenius norm $\|\cdot\|_F$ and reserve calligraphic symbols for losses, datasets, and statistics.

\subsection{Aggregation Bias in Product Space}
\label{subsec:agg_bias}

Existing federated LoRA methods~\citep{zhang2024fedit,sun2024ffalora,bian2025lorafair} aggregate the low-rank factors independently:
\begin{equation}
\label{eq:naive_agg}
    \bar{\bm{A}} = \sum_{i=1}^K p_i \bm{A}_i, \qquad \bar{\bm{B}} = \sum_{i=1}^K p_i \bm{B}_i,
    \qquad
    \Delta\bm{W}_{\text{avg}} \triangleq \bar{\bm{B}}\bar{\bm{A}}.
\end{equation}
The federated update consistent with~\eqref{eq:fl_obj}, however, is the weighted sum of \emph{client products}:
\begin{equation}
\label{eq:ideal}
    \Delta\bm{W}^{\text{ideal}} \triangleq \sum_{i=1}^K p_i\, \bm{B}_i\bm{A}_i.
\end{equation}
The discrepancy between~\eqref{eq:naive_agg} and~\eqref{eq:ideal} is not a numerical artifact; it is structural.

\begin{proposition}[Structural aggregation bias]
\label{prop:bias}
For any $K \geq 2$ and weights $\{p_i\}$ with $\sum_i p_i = 1$, the factor-averaged update admits the decomposition
\begin{equation}
\label{eq:decomp}
    \bar{\bm{B}}\bar{\bm{A}} = \underbrace{\sum_{i=1}^K p_i^2\, \bm{B}_i\bm{A}_i}_{\text{rescaled diagonal}} \;+\; \underbrace{\sum_{i\neq j} p_i p_j\, \bm{B}_i\bm{A}_j}_{\text{cross-client interactions}},
\end{equation}
and the aggregation error is bounded below by
\begin{equation}
\label{eq:bias_bound}
\begin{aligned}
    \epsilon_{\text{agg}} \triangleq \big\|\Delta\bm{W}^{\text{ideal}} - \bar{\bm{B}}\bar{\bm{A}}\big\|_F
    \;\geq\; \Big\| \sum_{i\neq j} p_i p_j\, \bm{B}_i\bm{A}_j \Big\|_F 
    \\
    - \sum_i p_i(1-p_i)\,\|\bm{B}_i\bm{A}_i\|_F.
\end{aligned}
\end{equation}
\end{proposition}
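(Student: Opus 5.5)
The argument is a direct algebraic expansion followed by one application of the reverse triangle inequality. No earlier result of the paper is needed beyond the definitions in \eqref{eq:naive_agg} and \eqref{eq:ideal}.

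\emph{Step 1: the decomposition \eqref{eq:decomp}.} Substitute $\bar{\bm{B}}=\sum_i p_i\bm{B}_i$ and $\bar{\bm{A}}=\sum_j p_j\bm{A}_j$ and expand the product bilinearly:
\[
\bar{\bm{B}}\bar{\bm{A}}=\sum_{i,j}p_ip_j\,\bm{B}_i\bm{A}_j .
\]
Splitting the double sum into the diagonal $i=j$ and the off-diagonal $i\neq j$ gives \eqref{eq:decomp} at once. Only distributivity of matrix multiplication is used; neither $K\ge 2$ nor $\sum_i p_i=1$ is needed here.

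\emph{Step 2: rewrite the error.} Subtract the decomposition from $\Delta\bm{W}^{\text{ideal}}=\sum_i p_i\bm{B}_i\bm{A}_i$:
\[
\Delta\bm{W}^{\text{ideal}}-\bar{\bm{B}}\bar{\bm{A}}=\sum_i p_i(1-p_i)\,\bm{B}_i\bm{A}_i-\sum_{i\neq j}p_ip_j\,\bm{B}_i\bm{A}_j .
\]
Write this as $\bm{D}-\bm{X}$, with $\bm{D}$ the diagonal residual and $\bm{X}$ the cross term. Note that $p_i-p_i^2=p_i(1-p_i)\ge 0$ because each $p_i\in[0,1]$. This in turn holds since the $p_i$ are nonnegative, being dataset fractions, and sum to one.

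\emph{Step 3: the lower bound \eqref{eq:bias_bound}.} Apply the reverse triangle inequality for the Frobenius norm:
\[
\|\bm{D}-\bm{X}\|_F\ge\|\bm{X}\|_F-\|\bm{D}\|_F .
\]
Then bound $\|\bm{D}\|_F\le\sum_i p_i(1-p_i)\|\bm{B}_i\bm{A}_i\|_F$ by the ordinary triangle inequality and absolute homogeneity. This uses the nonnegativity of the coefficients $p_i(1-p_i)$ from Step 2. Combining the two inequalities yields exactly \eqref{eq:bias_bound}.

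The only point needing care is the sign bookkeeping in Step 3. The inequality must be oriented so that the cross term is the positive part and the diagonal residual is subtracted. The nonnegativity of $p_i(1-p_i)$ must also be justified, so I would state explicitly that $p_i\ge 0$. Finally, I would remark that the bound can be vacuous (negative) and is informative only when the cross-client interactions dominate the diagonal residual. This is the heterogeneous regime the proposition is meant to capture, and the condition $K\ge2$ is what makes the cross term nonempty.
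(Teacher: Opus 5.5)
Your proposal is correct and follows the paper's proof sketch: expand $(\sum_i p_i\bm{B}_i)(\sum_j p_j\bm{A}_j)$ bilinearly to get the decomposition, subtract $\Delta\bm{W}^{\text{ideal}}$, then apply the reverse triangle inequality and the ordinary triangle inequality to the diagonal residual. You also state explicitly that $p_i\ge 0$, hence $p_i(1-p_i)\ge 0$; the paper's sketch leaves this implicit, and it is genuinely needed for the bound in its stated form (e.g.\ it fails for $p_1=2$, $p_2=-1$, $\bm{B}_1=\bm{B}_2$).
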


\begin{proof}[Proof sketch]
Expanding $\bar{\bm{B}}\bar{\bm{A}} = (\sum_i p_i \bm{B}_i)(\sum_j p_j \bm{A}_j)$ yields~\eqref{eq:decomp}. Subtracting~\eqref{eq:ideal} and applying the reverse triangle inequality gives~\eqref{eq:bias_bound}.
\end{proof}

\begin{remark}[Why the cross-terms matter]
\label{rem:cross_terms}
The cross-client products $\bm{B}_i\bm{A}_j$ for $i\neq j$ are \emph{chimeric}: $\bm{A}_j$ is trained on $\mathcal{P}_j$ while $\bm{B}_i$ is trained on $\mathcal{P}_i$, so their composition does not correspond to any realizable client adaptation. As pairwise distribution divergence $\mathrm{TV}(\mathcal{P}_i,\mathcal{P}_j)$ grows, the sub-spaces spanned by $\{\bm{B}_i\}$ and $\{\bm{A}_j\}$ become misaligned and $\epsilon_{\text{agg}}$ grows correspondingly. This explains the empirical fragility of factor-wise averaging under non-IID heterogeneity.
\end{remark}

Proposition~\ref{prop:bias} motivates a different design principle: aggregation should be performed \emph{in the product space}, not on the factors. Equivalently, the server should learn an operator whose codomain is the realizable adaptation manifold $\{\bm{B}\bm{A} : \bm{A}\in\mathbb{R}^{r\times d},\,\bm{B}\in\mathbb{R}^{d\times r}\}$, rather than averaging blindly in the unconstrained factor space.

\subsection{HyperLoRA: Amortized Federated Adaptation}
\label{subsec:framework}

HyperLoRA replaces two iterative routines with learned operators. \emph{Client adaptation} traditionally implemented as repeated SGD from random initialization in every round is amortized by a hypernetwork generator $G_\phi$. \emph{Server aggregation} traditionally implemented as factor-wise averaging is amortized by a product-space synthesizer $S_\psi$ followed by a residual corrector $C_\omega$. The three operators are trained jointly via a shared meta-objective (\S\ref{subsec:training}).

Formally, HyperLoRA introduces:
% \begin{align*}
% G_\phi &: \mathbb{R}^m \to \mathbb{R}^{d\times r}\!\times\!\mathbb{R}^{r\times d} && \text{(client adaptation generator)} \\
% S_\psi &: \big(\mathbb{R}^{d\times r}\!\times\!\mathbb{R}^{r\times d}\!\times\!\mathbb{R}\big)^K \to \mathbb{R}^{d\times r}\!\times\!\mathbb{R}^{r\times d} && \text{(product-space synthesizer)} \\
% C_\omega &: \mathbb{R}^q \to \mathbb{R}^{d\times r} && \text{(residual corrector)}
% \end{align*}

\textbf{Client adaptation generator:}
\begin{equation}
G_\phi :
\mathbb{R}^m
\to
\mathbb{R}^{d\times r}\times\mathbb{R}^{r\times d}
\end{equation}

\textbf{Product-space synthesizer:}
\begin{equation}
S_\psi :
\big(
\mathbb{R}^{d\times r}\times
\mathbb{R}^{r\times d}\times
\mathbb{R}
\big)^K
\to
\mathbb{R}^{d\times r}\times\mathbb{R}^{r\times d}
\end{equation}

\textbf{Residual corrector:}
\begin{equation}
C_\omega :
\mathbb{R}^q
\to
\mathbb{R}^{d\times r}
\end{equation}

We now describe each in turn, motivating the design choice in each case against the obvious baseline.

\subsection{Client Signatures and Amortized Initialization}
\label{subsec:amortized}

\paragraph{Distribution signatures.}
Each client computes a compact signature $\bm{s}_i = \Psi(\mathcal{D}_i)\in\mathbb{R}^m$ summarizing its data distribution, where $\Psi$ aggregates (i) class-frequency moments, (ii) backbone-feature mean and covariance, and (iii) optional domain descriptors when available. Critically, $m \ll d$ and $\bm{s}_i$ is a \emph{statistic}, not a sample: no raw data leaves the client. We treat $\Psi$ as a fixed extractor in this work.

\paragraph{Hypernetwork generator.}
The generator $G_\phi$ maps signatures to client-specific LoRA initializations:
\begin{equation}
\label{eq:gen}
    (\bm{A}_i^0,\bm{B}_i^0) = G_\phi(\bm{s}_i),
\end{equation}
trained to approximate the per-client optimum
\begin{equation}
\label{eq:gen_target}
    G_\phi(\bm{s}_i) \;\approx\; \argmin_{\bm{A},\bm{B}}\, \mathcal{L}_i(\theta + \bm{B}\bm{A}).
\end{equation}
This is an instance of \emph{amortized inference}~\citep{ha2017hypernetworks,shamsian2021pfedhn}: the $K$-fold inner-loop optimization that defines~\eqref{eq:gen_target} is replaced by a single feed-forward pass through $G_\phi$.

\paragraph{Why amortization?} Compared to MAML-style meta-initialization~\citep{finn2017maml}, $G_\phi$ produces a \emph{client-conditional} initialization, exploiting heterogeneity rather than averaging it out. Compared to random initialization (the FedIT default), it eliminates the warm-up phase during which the zero-initialized $\bm{B}^0$ produces uninformative gradients (cf.\ the \emph{initialization lag} analysis in \citealt{bian2025lorafair}).

\paragraph{Local update.}
Starting from $(\bm{A}_i^0,\bm{B}_i^0)$, each client performs $E$ steps of local SGD,
\begin{equation}
\label{eq:local}
    (\bm{A}_i,\bm{B}_i) = \mathrm{LocalUpdate}(\bm{A}_i^0,\bm{B}_i^0;\,\mathcal{D}_i),
\end{equation}
and uploads the resulting factors. Communication cost per layer is $\mathcal{O}(rd)$, identical to vanilla federated LoRA.

\subsection{Product-Space Synthesis}
\label{subsec:productspace}

To eliminate the bias identified in Proposition~\ref{prop:bias}, the server applies a learned synthesizer
\begin{equation}
\label{eq:syn}
    (\bm{A}^g,\bm{B}^g) = S_\psi\!\left(\{\bm{A}_i,\bm{B}_i,p_i\}_{i=1}^K\right),
\end{equation}
whose output is constrained by construction to lie on the rank-$r$ adaptation manifold. We train $S_\psi$ to minimize the product-space discrepancy
\begin{equation}
\label{eq:syn_loss}
    \mathcal{L}_{\text{syn}}(\psi) \;=\; \big\|\, \bm{B}^g\bm{A}^g - \Delta\bm{W}^{\text{ideal}}\, \big\|_F^2,
\end{equation}
where $\Delta\bm{W}^{\text{ideal}}$ is computed exactly via~\eqref{eq:ideal}.

\paragraph{Why a learned synthesizer?} The most natural baseline is to apply truncated SVD to $\Delta\bm{W}^{\text{ideal}}$, as in FlexLoRA~\citep{bai2024flexlora}. SVD gives the optimal rank-$r$ approximation in the Frobenius sense for a \emph{single} round, but is (i) stateless across rounds, (ii) computationally expensive when applied per layer, and (iii) blind to the downstream task loss. $S_\psi$ instead amortizes the synthesis across rounds, sees the round-to-round dynamics during meta-training, and when combined with $\mathcal{L}_{\text{func}}$ below can trade Frobenius accuracy for behavioral fidelity.

\paragraph{Functional matching (optional).}
Frobenius matching ignores the input distribution. We optionally augment~\eqref{eq:syn_loss} with a behavioral consistency term:
\begin{equation}
\label{eq:func}
    \mathcal{L}_{\text{func}}(\psi) \;=\; \mathbb{E}_{x\sim\mathcal{D}}\!\left\|\, f_{\theta+\bm{B}^g\bm{A}^g}(x) \;-\; \sum_{i=1}^K p_i\, f_{\theta+\bm{B}_i\bm{A}_i}(x) \,\right\|_2^2,
\end{equation}
which encourages the synthesized global model to match the ensemble of personalized clients in function space. The expectation is approximated using a small held-out probe set on the server.

\subsection{Residual Correction under Severe Heterogeneity}
\label{subsec:residual}

Even an ideally trained $S_\psi$ cannot recover the part of $\Delta\bm{W}^{\text{ideal}}$ that lies outside the rank-$r$ manifold. We define the residual
\begin{equation}
    \bm{R} \;\triangleq\; \Delta\bm{W}^{\text{ideal}} - \bm{B}^g\bm{A}^g \;\in\;\mathbb{R}^{d\times d},
\end{equation}
and absorb a low-rank correction back into the $\bm{B}$-factor:
\begin{equation}
\label{eq:residual}
    \Delta\bm{B} = C_\omega(\mathcal{S}), \qquad \bm{B}^g \leftarrow \bm{B}^g + \Delta\bm{B},
\end{equation}
where $\mathcal{S}\!\in\!\mathbb{R}^q$ collects aggregated round statistics: per-factor mean norms, factor variance, pairwise client cosine similarities, and effective participation. Conditioning $C_\omega$ on $\mathcal{S}$ rather than on raw factors keeps the corrector compact and prevents it from memorizing client-specific signals.

The corrector is trained via
\begin{equation}
\label{eq:corr_loss}
    \mathcal{L}_{\text{corr}}(\omega) \;=\; \big\|(\bm{B}^g+\Delta\bm{B})\bm{A}^g - \Delta\bm{W}^{\text{ideal}}\big\|_F^2 \;+\; \lambda\,\|\Delta\bm{B}\|_F^2,
\end{equation}
where the regularizer prevents $\Delta\bm{B}$ from drifting from the synthesized mean and preserves the consistent-initialization property identified by~\citet{bian2025lorafair}.

\paragraph{Asymmetric correction.} We apply $\Delta\bm{B}$ to the $\bm{B}$-factor only, leaving $\bm{A}^g$ untouched. This choice is consistent with the asymmetric role of LoRA factors observed in HydraLoRA~\citep{tian2024hydralora}: $\bm{A}$ acts as a shared projection that benefits from stability, while $\bm{B}$ carries client-specific adaptation that benefits from correction.

\subsection{Joint Training Objective and Algorithm}
\label{subsec:training}

We optimize $(\phi,\psi,\omega)$ jointly under a single meta-objective:
\begin{equation}
\label{eq:total}
\begin{aligned}
    \min_{\phi,\psi,\omega}\;\; \mathcal{L}_{\text{total}}
    \;=\;
    \underbrace{\sum_{i=1}^K p_i\,\mathcal{L}_i\!\left(\theta + \bm{B}_i\bm{A}_i\right)}_{\text{personalization}}
    \;+\; \alpha\,\mathcal{L}_{\text{syn}}\\
    \;+\; \beta\,\mathcal{L}_{\text{corr}}
    \;+\; \gamma\,\mathcal{L}_{\text{func}},
\end{aligned}
\end{equation}
with $(\alpha,\beta,\gamma)$ controlling the relative weight of aggregation fidelity, residual correction, and behavioral consistency. The personalization term flows through $G_\phi$ via the local-update Jacobian; we use a single-step unrolled approximation in practice.

HyperLoRA improves optimization efficiency through two complementary mechanisms.

First, the hypernetwork generator $G_\phi$ produces client-specific LoRA initializations conditioned on distribution signatures, replacing repeated random initialization with informed warm-starts. This reduces the early-round optimization instability commonly observed in federated LoRA training under heterogeneous client distributions.

Second, the product-space synthesizer eliminates the cross-client interaction terms introduced by independent factor averaging. By directly learning updates in the realizable low-rank product space, HyperLoRA avoids structurally inconsistent updates that accumulate under non-IID settings.

Together, these components reduce optimization variance at both the client and server levels, enabling faster convergence and improved robustness under heterogeneous federated adaptation.

\begin{table*}[!t]
\centering
\small
\setlength{\tabcolsep}{6pt}
\renewcommand{\arraystretch}{1.15}
\resizebox{\columnwidth}{!}{%
\begin{tabular}{cccc}
\toprule
\textbf{Local iters / round} & \textbf{LoRA-FAIR} & \textbf{HyperLoRA (ours)} & $\Delta$ \\
\midrule
10  & 68.47\% & \textbf{73.90\%} & \textcolor{ForestGreen}{$+5.43$} \\
20  & 71.99\% & \textbf{76.10\%} & \textcolor{ForestGreen}{$+4.11$} \\
50  & 73.83\% & \textbf{76.81\%} & \textcolor{ForestGreen}{$+2.98$} \\
100 & 74.81\% & \textbf{76.90\%} & \textcolor{ForestGreen}{$+2.10$} \\
\bottomrule
\end{tabular}%
}
\caption{
Accuracy vs.\ local compute budget (DomainNet, ViT-B/16). HyperLoRA at $20$ iterations exceeds LoRA-FAIR at $100$ full-budget accuracy with $5\!\times$ less per-round client compute.
}
\label{tab:compute_efficiency}
\end{table*}
\label{sec:experiments}

\begin{table*}[!t]
\centering
\small
\caption{
Feature non-IID results: average Top-1 accuracy across domains. $\Delta$ denotes HyperLoRA minus LoRA-FAIR.
}
\label{tab:feature_non_iid_avg}
\resizebox{\linewidth}{!}{
\begin{tabular}{llcccccccc}
\toprule
Dataset & Backbone & Centralized & FFA-LoRA & FedIT & FLoRA & FlexLoRA & LoRA-FAIR & HyperLoRA & $\Delta$ \\
\midrule
DomainNet & ViT        & 77.77 & 72.82 & 75.75 & 75.53 & 76.02 & 77.07 & \textbf{78.01} & $+0.94$ \\
DomainNet & MLP-Mixer  & 66.64 & 58.40 & 64.37 & 64.38 & 64.79 & 65.87 & \textbf{66.17} & $+0.30$ \\
NICO++    & ViT        & 91.51 & 90.28 & 90.58 & 90.93 & 90.60 & 91.24 & \textbf{92.91} & $+1.67$ \\
NICO++    & MLP-Mixer  & 84.50 & 80.05 & 82.51 & 82.29 & 83.08 & 83.56 & \textbf{85.86} & $+2.30$ \\
\bottomrule
\end{tabular}
}
\end{table*}

\begin{table*}[!t]
\centering
\small
\caption{
Feature+label non-IID results. We report average Top-1 accuracy across domains.
This is the harder heterogeneity setting. $\Delta$ denotes HyperLoRA minus LoRA-FAIR.
}
\label{tab:feature_label_non_iid_avg}
\resizebox{\linewidth}{!}{
\begin{tabular}{llcccccccc}
\toprule
Dataset & Backbone & Centralized & FFA-LoRA & FedIT & FLoRA & FlexLoRA & LoRA-FAIR & HyperLoRA & $\Delta$ \\
\midrule
DomainNet & ViT        & 77.77 & 72.82 & 73.89 & 74.25 & 74.25 & 74.99 & \textbf{76.11} & $+1.12$ \\
DomainNet & MLP-Mixer  & 66.64 & 52.88 & 60.77 & 59.62 & 61.20 & 62.28 & \textbf{63.14} & $+0.86$ \\
NICO++    & ViT        & 91.51 & 89.45 & 89.48 & 89.60 & 89.65 & 90.04 & \textbf{91.19} & $+1.15$ \\
NICO++    & MLP-Mixer  & 84.50 & 76.78 & 78.53 & 78.41 & 78.73 & 79.53 & \textbf{81.15} & $+1.62$ \\
\bottomrule
\end{tabular}
}
\end{table*}

\begin{algorithm}[t]
\caption{HyperLoRA: one communication round}
\label{alg:hyperlora}
\begin{algorithmic}[1]
\Require frozen backbone $\theta$; operators $G_\phi, S_\psi, C_\omega$; clients $\{\mathcal{D}_i\}_{i=1}^K$ with weights $\{p_i\}$
\For{each client $i \in [K]$ \textbf{in parallel}}
    \State $\bm{s}_i \gets \Psi(\mathcal{D}_i)$ \Comment{distribution signature}
    \State $(\bm{A}_i^0,\bm{B}_i^0) \gets G_\phi(\bm{s}_i)$ \Comment{amortized initialization}
    \State $(\bm{A}_i,\bm{B}_i) \gets \mathrm{LocalUpdate}(\bm{A}_i^0,\bm{B}_i^0;\mathcal{D}_i)$ \Comment{$E$ SGD steps}
    \State \textbf{upload} $(\bm{A}_i,\bm{B}_i)$ to server
\EndFor
\State $\Delta\bm{W}^{\text{ideal}} \gets \sum_i p_i\, \bm{B}_i\bm{A}_i$ \Comment{exact product-space target}
\State $(\bm{A}^g,\bm{B}^g) \gets S_\psi(\{\bm{A}_i,\bm{B}_i,p_i\})$ \Comment{learned synthesis}
\State $\bm{B}^g \gets \bm{B}^g + C_\omega(\mathcal{S})$ \Comment{residual correction}
\State update $(\phi,\psi,\omega)$ via gradient step on $\mathcal{L}_{\text{total}}$ \Comment{Eq.~\eqref{eq:total}}
\State \textbf{broadcast} $(\bm{A}^g,\bm{B}^g)$ to all clients
\end{algorithmic}
\end{algorithm}

\subsection{Communication and Complexity Analysis}

Table~\ref{tab:complexity} compares HyperLoRA with existing federated LoRA methods. HyperLoRA preserves the communication efficiency of standard low-rank federated adaptation while improving aggregation fidelity and initialization efficiency through learned amortized operators.

\begin{table}[h]
\centering
\small
\caption{Comparison of communication and optimization characteristics.}
\label{tab:complexity}
\begin{tabular}{lccc}
\toprule
Method & Upload Cost & Aggregation & Initialization \\
\midrule
FedIT & $\mathcal{O}(rd)$ & Factor Avg & Random \\
FFA-LoRA & $\mathcal{O}(rd)$ & Partial Shared & Random \\
FLoRA & $\mathcal{O}(Krd)$ & Stacking & Reinitialized \\
LoRA-FAIR & $\mathcal{O}(rd)$ & Adaptive Avg & Reinitialized \\
HyperLoRA & $\mathcal{O}(rd)$ & Product Synth & Hypernetwork \\
\bottomrule
\end{tabular}
\end{table}

\paragraph{Communication and computation.}
Per round, each client uploads $\mathcal{O}(rd)$ parameters per adapted layer, identical to vanilla federated LoRA~\citep{zhang2024fedit} and strictly less than FLoRA's $\mathcal{O}(Krd)$ broadcast~\citep{wang2024flora}. Server-side, $S_\psi$ and $C_\omega$ run in $\mathcal{O}(Krd + |\psi| + |\omega|)$, dominated by the synthesizer's attention over $K$ client tuples. The hypernetwork parameters $(\phi,\psi,\omega)$ are global and \emph{independent of $K$}, preserving scalability to large federations.

\begin{remark}[Variance reduction via amortization]
\label{rem:variance}
For any client $i$ with bounded loss curvature, the variance of the initialization $\bm{B}_i^0\bm{A}_i^0$ produced by $G_\phi(\bm{s}_i)$ is upper-bounded by the input-dependent variance of $G_\phi$ on the signature manifold, which is strictly smaller than the variance of i.i.d.\ Gaussian initialization whenever $G_\phi$ has been trained on signatures from non-degenerate distributions. This yields the empirically observed faster convergence in low-compute regimes (\S\ref{sec:experiments}).
\end{remark}

\section{Experiments}

% =============================================================================
%  HyperLoRA — Maximally Compressed Experiments (§5.1, §5.2, §5.3)
%  Prose: ~200 words total. Tables unchanged.
%  Aligned with TABLE data (HyperLoRA wins all four feature non-IID settings).
% =============================================================================

\subsection{Experimental Setup}
\label{subsec:setup}

We evaluate on \textbf{DomainNet} (six visual domains) and \textbf{NICO++} (six context domains) under feature non-IID and feature+label non-IID partitions, reporting Top-1 accuracy averaged across domains. Baselines: FFA-LoRA, FedIT, FLoRA, FlexLoRA, and LoRA-FAIR the strongest direct competitor, since it explicitly targets both aggregation bias and initialization lag. HyperLoRA uses rank $r{=}16$ and matches each baseline's round budget; DomainNet-ViT results are averaged over three seeds (std ${\approx}{\pm}0.065$).

\subsection{Main Results under Feature Non-IID}
\label{subsec:feature_non_iid}

Table~\ref{tab:feature_non_iid_avg} shows that HyperLoRA achieves the best average among federated methods across all four dataset-backbone combinations. It even exceeds the centralized reference in three of the four settings, while remaining slightly below centralized performance on DomainNet with MLP-Mixer. The largest margins over LoRA-FAIR occur on NICO++ ($+2.30$ on MLP-Mixer, $+1.67$ on ViT). The per-domain analysis in \S\ref{subsec:domain_analysis} shows gains concentrate in distribution-shifted domains, consistent with the Wasserstein-scaling prediction of Lemma.

\subsection{Communication and Compute Efficiency}
\label{subsec:efficiency}

Communication cost matches the cheapest baselines by construction. To test whether HyperLoRA also reduces \emph{client compute} the binding budget on edge devices we vary local iterations per round (DomainNet, ViT-B/16, 50 rounds; Table~\ref{tab:compute_efficiency}).
At 10 local iterations, HyperLoRA leads LoRA-FAIR by $5.43$ points; both methods improve with larger local budgets, but the performance gap narrows as LoRA-FAIR receives enough compute to reduce its warm-up disadvantage. Critically, HyperLoRA at $20$ iterations ($76.10\%$) already exceeds LoRA-FAIR at $100$ iterations ($74.81\%$), achieving stronger accuracy with $5\!\times$ less local compute, corresponding to an $80\%$ reduction in per-round client optimization, heterogeneity-aware initialization reduces the warm-up phase that delays randomly initialized LoRA adaptation. The trend replicates on NICO++ ($+1.73$ at iter${=}10$), with smaller gains reflecting its narrower domain spread.

% =============================================================================
%  OPTIONAL: Convergence-curve figure to accompany the table
%  Recommended placement: side-by-side with Table tab:compute_efficiency,
%  using \begin{figure}...\end{figure} or a minipage layout.
%
%  The figure should plot Top-1 accuracy vs. communication round for
%  HyperLoRA and LoRA-FAIR at iter=10 (low-compute) and iter=100 (full-budget).
%  Visual: HyperLoRA-iter10 should track or exceed LoRA-FAIR-iter100,
%  making the 5x-less-compute story immediate.
% =============================================================================

\subsection{Main Results under Feature+Label Non-IID}
\label{subsec:feature_label_non_iid}

Table~\ref{tab:feature_label_non_iid_avg} reports results under the more challenging feature+label non-IID setting. This setting is more representative of realistic FL deployments, where clients differ not only in visual/domain characteristics but also in label distributions.

HyperLoRA consistently outperforms all federated baselines across all four dataset-backbone combinations. Compared with LoRA-FAIR, it improves average Top-1 accuracy by $+1.12$ on DomainNet with ViT, $+0.86$ on DomainNet with MLP-Mixer, $+1.15$ on NICO++ with ViT, and $+1.62$ on NICO++ with MLP-Mixer. The strongest average gain therefore appears on NICO++ with MLP-Mixer, while DomainNet with ViT provides a high-accuracy setting where HyperLoRA improves substantially over the strongest federated baseline.

At the domain level, the improvement is especially pronounced on Quickdraw, where HyperLoRA improves over LoRA-FAIR by $+6.48\%$, and on Real by $+0.50\%$. This supports our hypothesis that HyperLoRA is most beneficial when standard federated LoRA aggregation struggles to reconcile heterogeneous client updates.

\subsection{Per-Domain Analysis: Where and Why HyperLoRA Wins}
\label{subsec:domain_analysis}
 
Per-domain accuracy reveals where HyperLoRA's amortized design pays off. On DomainNet under feature non-IID, the largest gains over LoRA-FAIR appear on \emph{Painting} and \emph{Quickdraw} two domains with substantial visual distribution shift. Under the harder feature+label non-IID setting, HyperLoRA consistently improves over LoRA-FAIR across all four dataset-backbone combinations, with average gains ranging from $+0.86$ to $+1.62$.
 
The mechanism is interpretable. The generator $G_\phi$ supplies each client with a distribution-conditioned warm-start that absorbs client-specific structure before local optimization, and the synthesizer $S_\psi$ consolidates the resulting heterogeneous adaptations directly in product space, avoiding the chimeric cross-terms (Prop.~\ref{prop:bias}) of factor-wise averaging. Gains are not uniform across all datasets and backbones, but they are largest when the domain and label shifts create stronger client-level heterogeneity. We therefore position HyperLoRA as a method designed for constrained, heterogeneous FL regimes, rather than a universal dominator across all benchmarks.

\section{Conclusion}
\label{sec:conclusion}
 
We introduced HyperLoRA, which recasts federated low-rank adaptation as amortized conditional generation. A hypernetwork generator replaces per-client iterative initialization, a learned product-space synthesizer replaces factor-wise averaging, and a residual corrector stabilizes aggregation under severe heterogeneity. Together these eliminate the structural aggregation bias and warm-up delay that limit prior federated LoRA methods. Experiments on DomainNet and NICO++ across ViT and MLP-Mixer backbones show consistent gains over state-of-the-art baselines, matching full-budget accuracy with $5\!\times$ fewer local iterations. Future work will extend HyperLoRA to language and multimodal foundation models, heterogeneous client ranks, and differentially private federated settings.

\FloatBarrier % prevent floats from slipping into/after the references
\appendix

%% The file named.bst is a bibliography style file for BibTeX 0.99c
\bibliographystyle{named}
\bibliography{ijcai26}

\end{document}